\documentclass{article}

\usepackage{subfiles}

\usepackage[plain]{shortcut}

\def\BibFile{../../post_training.bib}
\graphicspath{{../../figures/}}

\usepackage{enumitem}
\usepackage{multirow}

\setcounter{topnumber}{1}
\usepackage{dblfloatfix}

\title{Post Training in Deep Learning}
\author{Thomas Moreau \hskip5em Julien Audiffren\\[.5em]
	CMLA, ENS Paris-Saclay, CNRS, \\
	Universit\'e Paris-Saclay,\\
	94235 Cachan, France\\
	\texttt{\{thomas.moreau, julien.audiffren\}@cmla.ens-cachan.fr}\\
}

\usepackage{xr}
\externaldocument{/home/tom/.cache/gummi/.0_last_kernel.tex}
\def\crossref{\externaldocument{/home/tom/.cache/gummi/.0_last_kernel.tex}}

\clubpenalties 3 10000 10000 0


\begin{document}

\def\crossref{}
\def\biblio{}


%
%
%

\maketitle

\begin{abstract}

	One of the main challenges of deep learning methods is the choice of an appropriate
	training strategy. In particular, additional steps, such as unsupervised pre-training,
	have been shown to greatly improve the performances of deep structures.
	In this article, we propose an extra training step, called post-training, which
	only optimizes the last layer of the network. We show that this procedure can be
	analyzed in the context of kernel theory, with the first layers computing an
	embedding of the data and the last layer a statistical model to solve
	the task based on this embedding. This step makes sure that the embedding, or
	representation, of the data is used in the best possible way for the considered
	task. This idea is then tested on multiple architectures with various data sets,
	showing that it consistently provides a boost in performance.

\end{abstract}


\crossref{}

\section{Training Neural Networks}
\label{sec:posttrain:context}

	One of the main challenges of the deep learning methods is to efficiently
	solve the highly complex and non-convex optimization problem involved in the
	training step. Many parameters influence the performances of trained networks,
	and small mistakes can drive the algorithm into a sub-optimal local minimum,
	resulting into poor performances \citep{bengio2007scaling}. Consequently, the
	choice of an appropriate training strategy is critical to the usage of deep
	learning models.
	
	The most common approach to train deep networks is to use the stochastic gradient
	descent (SGD) algorithm. This method selects a
	few points in the training set, called a batch, and compute the gradient of a cost
	function relatively to all the layers parameter. The gradient is then used to
	update the weights of all layers. Empirically, this method converges most of the
	time to a local minimum of the cost	function which have good generalization
	properties. The stochastic updates estimate the gradient of the error on the input
	distribution, and several works proposed to use variance reduction technique such
	as Adagrap \citep{Duchi2011}, RMSprop \citep{Hinton2012} or Adam \citep{Kingma2015},
	to achieve faster convergence.
	
	While these algorithms converge to a local minima, this minima is often influenced
	by the properties of the initialization used for the network weights. A frequently
	used approach to find a good starting point is to use pre-training 
	\citep{larochelle2007empirical,Hinton2006,hinton2006reducing}. This method
	iteratively constructs each layer using unsupervised learning to capture the
	information from the data. The network is then fine-tuned using SGD to solve the
	task at hand. Pre-training strategies have been applied successfully to many
	applications, such as classification tasks \citep{bengio2007scaling, poultney2006efficient},
	regression \citep{hinton2008using}, robotics \citep{hadsell2008deep} or information
	retrieval \citep{salakhutdinov2009semantic}. The influence of different pre-training
	strategies over the different layers has been thoroughly studied in
	\citet{larochelle2009exploring}.
	In addition to improving the training strategies, these works also shed light onto
	the role of the different layers \citep{erhan2010does, montavon2011kernel}. The first
	layers of a deep neural network, qualified as \textit{general}, tend to learn feature
	extractors which can be reused in other architectures, independently of the solved task.
	Meanwhile, the last layers of the network are much more dependent of the task and
	data set, and are said to be \textit{specific}.


	Deep Learning generally achieves better results than shallow structures, but the later
	are generally easier to train and more stable. For convex models such as logistic
	regression, the training problem is also convex when the data representation is fixed.
	The separation between the representation and the model learning is a key ingredient
	for the model stability. When the representation is learned simultaneously, for
	instance with dictionary learning or with EM algorithms, the problem often become
	non-convex. But this coupling between the representation and the model is critical for
	end-to-end models. For instance, \citet{Hinton2006} showed that for networks trained
	using pre-training, the fine-tuning step -- where all the layers are trained together
	-- improves the performances of the network. This shows the importance of the adaptation
	of the representation to the task in end-to-end models.

	Our contribution in this chapter is an additional training step which improves the
	use of the representation learned by the network to solve the considered task. This
	new step is called \textit{post-training}. It is based on the idea of separating
	representation learning and statistical analysis and it should be used after the
	training of the network. In this step, only the specific layers are trained. Since
	the general layers -- which encode the data representation -- are fixed, this step
	focuses on finding the best usage of the learned representation to solve the desired
	task. In particular, we chose to study the case where only the last layer is trained
	during the post-training, as this layer is the most specific one
	\citep{yosinski2014transferable}. In this setting, learning the weights of the last
	layer corresponds to learning the weights for the kernel associated to the feature
	map given by the previous layers. The post-training scheme can thus be interpreted
	in light of different results from kernel theory. To summarize our contributions:
	\begin{itemize}
		\renewcommand{\labelitemi}{{\scriptsize$\bullet$}}
		\item We introduce a post-training step, where all layers except the last one are
		frozen. This method can be applied after any traditional training scheme for
		deep networks. Note that this step does not replace the end-to-end training,
		which co-adapts the last layer representation with the solver weights, but it
		makes sure that this representation is used in the most efficient way for the
		given task.
		\item We show that this post-training step is easy to use, that it can be effortlessly
		added to most learning strategies, and that it is computationally inexpensive.
		\item We highlight the link existing between this method and the kernel techniques.
		We also show numerically that the previous layers can be used as a kernel map when
		the problem is small enough.
		\item We experimentally show that the post-training does not overfit and often
		produces improvement for various architectures and data sets.
	\end{itemize}

	The rest of this article is organized as follows: \autoref{sec:post_train:contrib} introduces the
	post-training step and discusses its relation with kernel methods. \autoref{sec:post_train:numerical}
	presents our numerical experiments with multiple neural network architectures and
	data sets and \autoref{sec:post_train:discussion} discusses these results.



\crossref{}

\section{Post-training}
\label{sec:post_train:contrib}

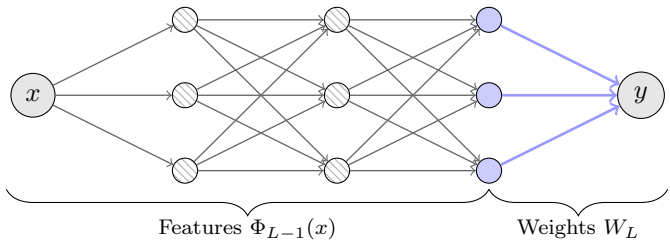
\begin{figure}[t]
	\centering
	\scalebox{1.0}{

	\begin{tikzpicture}
	
		\pgfmathsetmacro{\nwid}{2}
		\pgfmathsetmacro{\nlay}{3}
		\pgfmathtruncatemacro{\nlay}{\nlay-1}
		\pgfmathtruncatemacro{\nlaymoins}{\nlay-1}
		\node[draw, circle, fill=gray!20] (inp) at (-2,\nwid/2) {$x$};
		\node[draw, circle, fill=gray!20] (out) at (\nlay*2+2,\nwid/2) {$y$};\
		\definecolor{mycolor}{rgb}{.5,.5,.5}
		\foreach \i in {0,1,...,\nlay}{
			\ifthenelse{\NOT \nlay = \i}{}{}
			\foreach \j in {0,1,...,\nwid}{
				\ifthenelse{\NOT \nlay = \i}{
					\node[draw, circle, pattern=north west lines, pattern color=mycolor!60] (neuron-\i-\j) at (\i*2,\j*1.) {};}{
					\node[draw, circle, fill=blue!20] (neuron-\i-\j) at (\i*2,\j*1.) {};}
			}
		}
		\foreach \ij in {0,1,...,\nwid}{
		 	\draw[black!60, line width = .5 pt,->] (inp) to (neuron-0-\ij);
		 	\draw[blue!40, line width = 1 pt,->] (neuron-\nlay-\ij) to (out);
		}
		\foreach \i in {0,1,...,\nlaymoins}{
			\pgfmathtruncatemacro{\ii}{\i+1}
			\foreach \j in {0,1,...,\nwid}{
				\foreach \jj in {0,1,...,\nwid}{
					\draw[black!60, line width = .5pt,->] (neuron-\i-\j) to (neuron-\ii-\jj);
				}
			}
		}
		
		\draw [decoration={brace,amplitude=1em, mirror},decorate] ($(inp)-(0, \nwid/2)+(-1em,-.7em)$) -- ($(neuron-\nlay-0)+(0em,-.7em)$) node [black,midway,yshift=-1.5em] {\footnotesize Features $\Phi_{L-1}(x)$};
		\draw [decoration={brace,amplitude=1em, mirror},decorate] ($(neuron-\nlay-0)+(0em,-.7em)$) -- ($(out)-(0, \nwid/2)+(1em,-.7em)$) node [black,midway,yshift=-1.5em] {\footnotesize Weights $W_L$ };
	\end{tikzpicture}
	}
	\caption{Illustration of post-training applied to a neural network. During the post-training, only the weights of the blue edges are updated. The blue nodes can be seen as the embedding of $x$ in the feature space $\Xset_L~$. }
	\label{fig:lastkernel}
\end{figure}



In this section, we consider a feedforward neural network with $L$ layers, where $\Xset_1, \ldots, \Xset_{L}$ denote the input space of the different layers, typically $\Rset^{d_l}$ with $d_l > 0$ and \mbox{$\Yset = \Xset_{L+1}$} the output space of our network.
Let $\phi_{l} : \Xset_l \mapsto \Xset_{l+1}$ be the applications which respectively compute the output of the $l$-th layer of the network, for $1\le l \le L$, using the output of the $\smeq l-1$-th layer and $\Phi_L = \phi_L \circ \dots \circ \phi_1$ be the mapping of the full network from $\Xset_1$ to $\Yset~.$ Also, for each layer $l$, we denote $\pmb W_l$ its weights matrix and $\psi_l$ its activation function.

The training of our network is done using a convex and continuous loss function $\ell : \Yset \times \Yset \mapsto \Rset^+$. The objective of the neural network training is to find weights parametrizing $\Phi_L$ that solves the following problem:
\begin{equation}
	\min_{\Phi_L} \mathbb E_{(x, y) \sim \mathcal P}\left[\ell\left(\Phi_L(x), y\right)\right]~.
	\label{eq:init}
\end{equation}
for a certain input distribution $\mathcal P$ in $(\Xset_1,\Yset)$. The training set is
\mbox{$\Dset = \left(x_i, y_i\right)_{i=1}^N$}, drawn from this input distribution.

Using these notations, the training objective \autoref{eq:init} can then be rewritten
\begin{equation}
	\label{eq:last_layer}
	\min_{\Phi_{L-1}, \pmb W_L} \mathbb E_{(x, y) \sim \mathcal P}
		\left[\ell\left(\psi_L\left(\pmb W_L\Phi_{L-1}(x)\right),~y\right)\right]~.
\end{equation}
This reformulation highlights the special role of the last layer in our network compared to the others.
When $\Phi_{L-1}$ is fixed, the problem of finding $\pmb W_L$ is simple for several popular choices
of activation function $\psi_L$ and loss $\ell~.$
For instance, when the activation function $\psi_L$
is the {\tt softmax} function and the loss $\ell$ is the {\tt cross entropy},
\autoref{eq:last_layer} is a multinomial logistic regression. In this case, training the last layer is equivalent to a regression of the labels $y$
using the embedding of the data $x$ in $\Xset_L$ by the mapping $\Phi_{L-1}~$. Since the problem is convex in $\pmb W_L$ (see \autoref{sec:post_train:appendix}), classical optimization techniques can efficiently produce  an accurate approximation of
the optimal weights $\pmb W_L$ -- and this optimization given the mapping  $\Phi_{L-1}$ is the idea behind post-training.

Indeed, during the regular training, the network tries to simultaneously learn suitable representation for the data in the space $\Xset_L$ through its $L-1$ first layer and the best use of this representation with $\pmb W_L$.
This joint minimization is a strongly non-convex problem, therefore resulting in a potentially sub-optimal usage of the learned data representation.

The post-training is an additional step of learning which takes place after the regular training and proceeds as follows :
\begin{enumerate}\itemsep.5em
	\item {\bf Regular training:} This step aims to obtain interesting features
	to solve the initial problem, as in any usual deep learning training. Any
	training strategy can be applied to the network, optimizing the empirical loss
	\begin{equation}
		\label{eq:loss}
		\argmin_{\Phi_L} \frac{1}{N}\sum_{i=1}^N \ell\left(\Phi_L(x_i), y_i\right)~.
	\end{equation}
	The stochastic gradient descent explores the parameter space and provides
	a solution for $\Phi_{L-1}$ and $\pmb W_L$. This step is non restrictive: any type of training strategy can be used
	here, including gradient bias reduction techniques, such as Adagrad
	\citep{Duchi2011}, or regularization strategies, for instance
	using Dropout \citep{dahl2013improving}. Similarly, any type of stopping
	criterion can be used here. The training might last for a fixed number
	of epochs, or can stop after using early stopping \citep{morgan1989generalization}.
	Different combinations of training strategies and stopping criterion are
	tested in \autoref{sec:post_train:numerical}.

	\item {\bf Post-training:} During this step, the first $L-1$ layers are fixed and only the last layer of
	the network, $\phi_L,$ is trained by minimizing over $\pmb W_L$ the following problem
	\begin{equation}\label{eq:post_training}
	\argmin_{\pmb W_L}\frac{1}{N} \sum_{i=1}^N \ltil\left( \Phi_{L-1}(x_i) \pmb W_L\tran,y_i \right) + \lambda \|\pmb W_L\|^2_2~,
	\end{equation}
	where $\ltil(x,y) := \ell ( \psi_L(x),y)~.$
	This extra learning step uses the mapping $\Phi_{L-1}$ as an embedding of the data
	in $\Xset_L$ and learn the best linear predictor in this space. This optimization
	problem takes place in a significantly lower dimensional space and since there is
	no need for back propagation, this step is computationally faster. To reduce the risk
	of overfitting with this step, a $\ell_2$-regularization is added.
	\autoref{fig:lastkernel} illustrates the post-training step.
\end{enumerate}

We would like to emphasize the importance of the $\ell_2$-regularization used during the post-training \autoref{eq:post_training}.
This regularization is added regardless of the one used in the regular training, and for all the network architectures.
The extra term improves the strong convexity of the minimization problem, making post-training more efficient, and promotes the generalization of the model.
The choice of the $\ell_2$-regularization is motivated from the comparison with the kernel framework discussed in \autoref{sec:post_train:kernel} and from our experimental results.

\vskip1em
\begin{remark}[Dropout.]
	It is important to note that Dropout should not be applied on the previous layers of the network during the post-training, as it would lead to changes in the feature function $\Phi_{L-1}$.
\end{remark}

%
%
%
%

\section{Link with Kernels}
\label{sec:post_train:kernel}

	In this section, we show that for the case where $\Xset_{L}=\Rset^{d_L}$
	for some $d_L>0$ and $\Xset_{L+1}=\Rset$, $\pmb W_L^*$ can be approximated
	using kernel methods. We define the kernel $k$ as follows,
	\begin{equation*}
	\begin{split}
	k : \Xset_1 \times \Xset_1 &\mapsto \Rset \\
	 (x_1,x_2) &\rightarrow \left\langle\Phi_{L-1}(x_1), \Phi_{L-1}(x_2) \right\rangle~.
	\end{split}
	\end{equation*}
	Then $k$ is the kernel associated with the feature function $\Phi_{L-1}$. It is
	easy to see that this kernel is continuous positive definite and that for
	$\pmb W \in \Rset^{d_L}$, the function
	\begin{equation}
		\label{eq:post_train:gW}
		\begin{split}
		g_{\pmb W} : \Xset_{1} &\mapsto \Xset_{L+1} \\
		 x &\rightarrow \left\langle\Phi_{L-1}(x), \pmb W \right\rangle
		\end{split}
	\end{equation}
	belongs by construction to the Reproducing Kernel Hilbert Space (RKHS)
	$\Hset_k$ generated by $k$.
	The post-training problem \autoref{eq:post_training} is therefore related to
	the problem posed in the RKHS space $\Hset_k$, defined by 
	\begin{equation*}
		g^* = \argmin_{g \in \Hset_k}\frac{1}{N} \sum_{i=1}^N \ltil\left( g(x_i), y_i \right)
				+ \lambda \|g\|_{\Hset_k}^2~,
	\end{equation*}
	This problem is classic for the kernel methods. With mild hypothesis on
	$\ltil$, the generalized representer theorem can be applied
	\citep{scholkopf2001generalized}. As a consequence, there exists
	$\alpha^* \in \Rset^{N}$ such that
	\begin{equation}\label{eq:opt_g}
		\begin{split}
		g^* &:=\argmin_{g \in \Hset_k}\frac{1}{N} \sum_{i=1}^N \ltil\left( g(x_i), y_i \right)
					+ \lambda \|g\|_{\Hset_k}^2~\\
		&= \sum_{i=1}^N \alpha^*_i k(X_i,\cdot)
					= \sum_{i=1}^N \left\langle  \alpha^*_i  \Phi_{L-1}\left( x_i \right),
							\Phi_{L-1}(\cdot) \right\rangle.
		\end{split}
	\end{equation}
	Rewriting \autoref{eq:opt_g} with $g^*$ of the form \autoref{eq:post_train:gW},
	we have that $g^* = g_{\pmb W^*}$, with
	\begin{equation}\label{eq:w*}
		\pmb W^* = \sum_{i=1}^N \alpha^*_i  \Phi_{L-1}\left(x_i\right)`.
	\end{equation}
	We emphasize that $\pmb W^*$ gives the optimal solution for the problem \autoref{eq:opt_g}
	and should not be confused with $\pmb W_L^*~,$ the optimum of \autoref{eq:post_training}.
	However, the two problems differ only in their regularization, which are closely
	related (see the next paragraph). Thus $\pmb W^*$ can thus be seen as an approximation
	of the optimal value $\pmb W^*_L$. It is worth noting that in our experiments, $\pmb W^*$
	appears to be a nearly optimal estimator of  $\pmb W^*_L$ (see \autoref{sub:expopt}).

\paragraph{Relation between $\| \cdot\|_\Hset$ and $\|\cdot \|_2$.}

	The problems \autoref{eq:opt_g} and \autoref{eq:post_training} only differ in the
	choice of the regularization norm.
	By definition of the RKHS norm, we have
	\[
		\|g_W\|_\Hset = \inf \left \{ \|v\|_2 \middle/~~~\forall x \in \Xset_1,~~~
			\langle v,~ \Phi_{L-1}(x)\rangle = g_W(x)\right \}~.
	\]
	Consequently, we have that $\|g_W\|_\Hset \le \|W\|_2~,$ with equality when $\text{Vect}(\Phi_{L-1}(\Xset_1))$ spans the entire space $\Xset_{L}$. In this case, the norm induced by the RKHS is equal to the $\ell_2$-norm.
	This is generally the case, as the input space is usually in a far higher dimensional space than the embedding space, and since the neural network structure generally enforces the independence of the features. Therefore, while both norms can be used in \eqref{eq:post_training}, we chose to use the $\ell_2$-norm for all our experiments as it is easier to compute than the RKHS norm.

\paragraph{Close-form Solution.}
In the particular case where $\ell(y_1,y_2) = \| y_1 - y_2 \| ^2$ and $f(x)=x$, \autoref{eq:opt_g} can be reduced to a classical Kernel Ridge Regression problem.
In this setting, $W^*$ can be computed by combining \eqref{eq:w*} and
\begin{equation}
	\label{eq:sol_alpha}
	\alpha^* = \left(\Phi_{L-1}(\Dset)\tran \Phi_{L-1}(\Dset) + \lambda {\pmb I}_N \right)^{-1} Y~,
\end{equation}
where $\Phi_{L-1}(\Dset) = \begin{bmatrix}
	\Phi_{L-1}(x_1), \dots \Phi_{L-1}(x_N)
\end{bmatrix}$ represents the matrix of the input data $\left \{ x_1, \dots x_N \right \}$ embedded in $\Xset_{L}$, $Y$ is the matrix of the output data $\left \{ y_1, \dots, y_N \right \}$ and ${\pmb I}_N$ is the identity matrix in $\Rset^N$.
This result is experimentally illustrated in  \autoref{sub:expopt}.
Although  data sets are generally too large for \autoref{eq:sol_alpha} to be computed in practice, it is worth noting that some kernel methods, such as Random Features \citep{rahimi2007random}, can be applied to compute approximations of the optimal weights during the post-training.

\paragraph{Multidimensional Output.}
Most of the previously discussed results related to kernel theory hold for multidimensional output spaces, \ie{} dim$(\Xset_{L+1})=d>1$, using multitask or operator valued kernels \citep{kadri2015operator}.
Hence the previous remarks can be easily extended to multidimensional outputs, encouraging the use of post-training in most settings.


\crossref{}

\section{Experimental Results}
\label{sec:post_train:numerical}

This section provides numerical arguments to study
post-training  and its influence on performances,
over different data sets and network architectures.
All the experiments were run using {\tt python}
and {\tt Tensorflow}. The code to reproduce the
figures is available online\footnote{The code is available at
\url{https://github.com/tomMoral/post_training}
}. The results of all the experiments are discussed in
depth in \autoref{sec:post_train:discussion}.

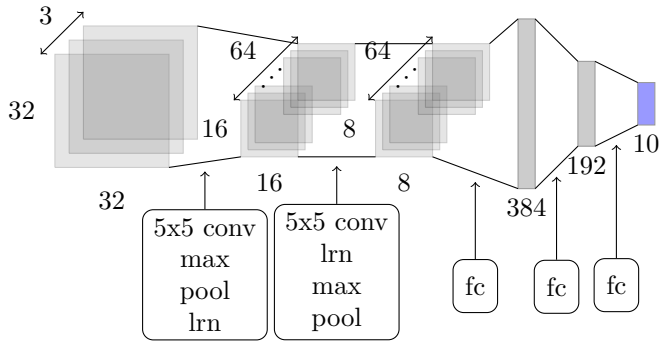
\begin{figure}[t]
	\centering
	\fontsize{10}{12}\selectfont
	\scalebox{1}{

	\begin{tikzpicture}
		
		\definecolor{mycolor}{rgb}{.5,.5,.5}
		\tikzset{
		varstyle/.style={
		       rectangle,
		       fill=white,
		       rounded corners,
		       draw=black,
		       text width=4em,
		       minimum height=2em,
		       text centered},
		fcstyle/.style={
		       rectangle,
		       fill=white,
		       rounded corners,
		       draw=black,
		       text width=1em,
		       minimum height=2em,
		       text centered}
		}
		\pgfmathsetmacro{\len}{1.5}
		\pgfmathsetmacro{\off}{\len/8}
		\pgfmathsetmacro{\nfilt}{3}
		\pgfmathsetmacro{\x}{0}
		\pgfmathsetmacro{\xstart}{\x-\len*\nfilt/16}
		\pgfmathsetmacro{\ystart}{-\len/2-\len*\nfilt/16}
		
		\foreach \j in {1,...,\nfilt}{
			\draw[fill=mycolor,opacity=0.2,draw=black] (\xstart+\j*\off,\ystart+\j*\off)
				-- +(\len,0) -- +(\len,\len) -- +(0,\len) -- +(0,0);	
		}
		\node at (\xstart-.25,\ystart+\len/2+\off){32};
		\node at (\xstart+\len/2+\off,\ystart-.25){32};
		\draw[<->] (\xstart,\ystart+\len+\off)-- node [above left] {\nfilt} ++(\off*\nfilt, \off*\nfilt) ;
		
		\pgfmathsetmacro{\nnfilt}{64}
		\pgfmathsetmacro{\nnnfilt}{sqrt{\nnfilt}+1}
		\pgfmathsetmacro{\nx}{\x+\off*\nfilt/2+2.4}
		\pgfmathsetmacro{\nlen}{.75}
		\pgfmathsetmacro{\noff}{\nlen/8}
		\draw (\xstart+\nfilt*\off+\len, \ystart+\off*\nfilt+\len) -- (\nx+\noff*\nnnfilt/2, \nnnfilt/2*\noff+\nlen/2);
		\draw (\xstart+\len+\off, \ystart+\off) -- (\nx-\noff*\nnnfilt/2+\noff, -\nnnfilt/2*\noff-\nlen/2+\noff)  node [midway] (expl){};
		\draw[<-] (expl.south) -- ++(0, -.5) node [below, varstyle] (a) {5x5 conv\\max pool\\lrn};
		\pgfmathsetmacro{\x}{\nx}
		\pgfmathsetmacro{\nfilt}{sqrt{\nnfilt}+1}
		\pgfmathsetmacro{\len}{\nlen}
		\pgfmathsetmacro{\off}{\noff}
		\pgfmathsetmacro{\xstart}{\x-\off*\nfilt/2}
		\pgfmathsetmacro{\ystart}{-\len/2-\off*\nfilt/2}

		\foreach \j in {1,...,\nfilt}{
			\ifthenelse{\NOT \j = 5 \AND \NOT \j = 4\AND \NOT \j = 6}{
				\draw[fill=mycolor,opacity=0.2,draw=black] (\xstart+\j*\off,\ystart+\j*\off)
			-- +(\len,0) -- +(\len,\len) -- +(0,\len) -- +(0,0);}{
				}
		}
		\node at (\xstart-.25,\ystart+\len/2+\off){16};
		\node at (\xstart+\len/2+\off,\ystart-.25){16};
		\draw[<->] (\xstart,\ystart+\len+\off)-- node [above left] {\nnfilt} node [below,sloped] {\dots} ++(\off*\nfilt, \off*\nfilt) ;
		
		\pgfmathsetmacro{\nx}{\x+\off*\nfilt/2+1.35}
		\draw (\xstart+\nfilt*\off+\len, \ystart+\off*\nfilt+\len) -- (\nx+\off*\nfilt/2, \nfilt/2*\off+\len/2);
		\draw (\xstart+\len+\off, \ystart+\off) -- (\nx-\off*\nfilt/2+\off, -\nfilt/2*\off-\len/2+\off)  node [midway] (expl){};
		\draw[<-] (expl.south) -- ++(0, -.5) node [below, varstyle] (a) {5x5 conv\\ lrn \\ max pool};
		\pgfmathsetmacro{\x}{\nx}
		\pgfmathsetmacro{\nnfilt}{64}
		\pgfmathsetmacro{\nfilt}{sqrt{\nnfilt}+1}
		\pgfmathsetmacro{\off}{\len/8}
		\pgfmathsetmacro{\xstart}{\x-\off*\nfilt/2}
		\pgfmathsetmacro{\ystart}{-\len/2-\off*\nfilt/2}

		\foreach \j in {1,...,\nfilt}{
			\ifthenelse{\NOT \j = 5 \AND \NOT \j = 4\AND \NOT \j = 6}{
				\draw[fill=mycolor,opacity=0.2,draw=black] (\xstart+\j*\off,\ystart+\j*\off)
			-- +(\len,0) -- +(\len,\len) -- +(0,\len) -- +(0,0);}{
				}
		}
		\node at (\xstart-.25,\ystart+\len/2+\off){8};
		\node at (\xstart+\len/2+\off,\ystart-.25){8};
		\draw[<->] (\xstart,\ystart+\len+\off)-- node [above left] {\nnfilt} node [below,sloped] {\dots} ++(\off*\nfilt, \off*\nfilt);
		
		\pgfmathsetmacro{\nx}{\x+\off*\nfilt/2+1.5*\len}
		\draw (\xstart+\nfilt*\off+\len, \ystart+\off*\nfilt+\len) -- (\nx, 3*\len/2);
		\draw (\xstart+\len+\off, \ystart+\off) -- (\nx, -3*\len/2) node [midway] (t){}  node [midway] (expl){};
		\draw[<-] (expl.south) -- ++(0, -1.02) node [below, fcstyle] {fc};
		\pgfmathsetmacro{\x}{\nx}
		\pgfmathsetmacro{\len}{3*\len}
		\pgfmathsetmacro{\width}{.1*\len}
		\draw[fill=mycolor,opacity=0.4,draw=black] (\x,-\len/2)
			-- +(\width,0) -- +(\width,\len) -- +(0,\len) -- +(0,0);
		\node at (\x+\width/2,-\len/2-.25){384};
		\pgfmathsetmacro{\nx}{\x+.35*\len}
		\pgfmathsetmacro{\len}{\len/2}
		\draw (\x+\width, \len) -- (\nx, \len/2);
		\draw (\x+\width, -\len) -- (\nx, -\len/2)  node [midway] (expl){};
		\draw[<-] (expl.south) -- ++(0, -1.1) node [below, fcstyle] (a) {fc};
		\pgfmathsetmacro{\x}{\nx}
		\draw[fill=mycolor,opacity=0.4,draw=black] (\x,-\len/2)
			-- +(\width,0) -- +(\width,\len) -- +(0,\len) -- +(0,0);
		\node at (\x+\width/2,-\len/2-.25){192};
		\pgfmathsetmacro{\nx}{\x+.7*\len}
		\pgfmathsetmacro{\len}{\len/2}
		\draw (\x+\width, \len) -- (\nx, \len/2);
		\draw (\x+\width, -\len) -- (\nx, -\len/2)  node [midway] (expl){};
		\draw[<-] (expl.south) -- ++(0, -1.5) node [below, fcstyle] (a) {fc};
		\pgfmathsetmacro{\x}{\nx}
		\draw[fill=blue,opacity=0.4,draw=black] (\x,-\len/2)
			-- +(\width,0) -- +(\width,\len) -- +(0,\len) -- +(0,0);
		\node at (\x+\width/2,-\len/2-.25){10};

	\end{tikzpicture}
	}
	\caption{Illustration of the neural network structure used for CIFAR-10. The last layer, represented in blue, is the one trained during the post-training. The layers are composed with classical layers: 5x5 convolutional layers (5x5 conv), max pooling activation (max pool), local response normalization (lrn) and fully connected linear layers (fc).}
	\label{fig:deconvolutional-layer}
\end{figure}

\subsection{Convolutional Neural Networks}
\label{sub:cifar10}

\begin{figure*}[t]
\centering
\includegraphics[width=\linewidth]{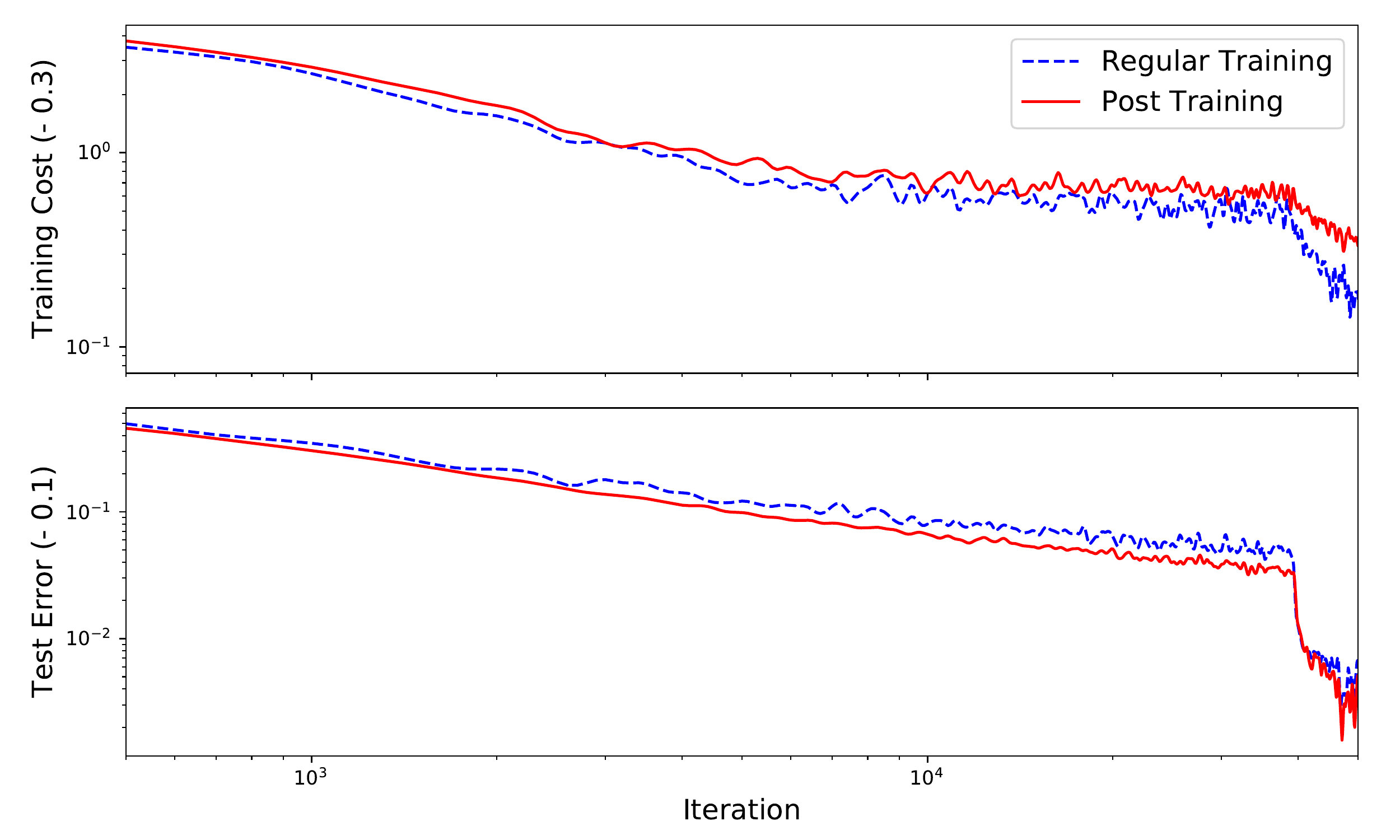}
\caption{Evolution of the performances of the neural network on the CIFAR10 data set,
		 ({\it dashed}) with the usual training and ({\it solid}) with the post-training
		 phase. For the post-training, the value of the curve at iteration $q$ is the error
		 for a network trained for $q-100$ iterations with the regular training strategy
		 and then trained for $100$ iterations with post-training. The \emph{top} figure
		 presents the classification error on the training set and the \emph{bottom} figure
		 displays the loss cost on the test set. The curves have been smoothed to increase
		 readability.
}
\label{fig:cifar10}
\end{figure*}

The post-training method can be applied easily to feedforward convolutional
neural network, used to solve a wide class of real world problems. To assert
its performance, we apply it to three classic benchmark datsets: CIFAR10
\citep{krizhevsky2009learning}, MNIST and FACES \citep{hinton2006reducing}.

\paragraph{CIFAR10.}

	This data set is composed of $60,000$ images $32\times 32$, representing objects from
	$10$ classes. We use the default architecture proposed by {\tt Tensorflow} for CIFAR10
	in our experiments, based on the original architecture proposed by
	\citet{krizhevsky2009learning}. It is composed of $5$ layers described in
	\autoref{fig:deconvolutional-layer}. The first layers use various common tools such as
	local response normalization (lrn), max pooling and RELU activation. The last layer have
	a \emph{softmax} activation function and the chosen training loss was the cross entropy
	function. The network is trained for $90k$ iterations, with batches of size $128$, using
	stochastic gradient descent (SGD), dropout and an exponential weight decay for the learning
	rate. \autoref{fig:cifar10} presents the performance of the network on the training and test
	sets for 2 different training strategies. The dashed line present the classic training with
	SGD, with performance evaluated every 100 iterations and the solid line present the
	performance of the same network where the last 100 iterations are done using post-training
	instead of regular training. To be clearer, the value of this curve at iteration $q$ is the
	error of the network, trained for $q-100$ iterations with the regular training strategy,
	and then trained for $100$ iterations with post-training. The regularization parameter
	$\lambda$ for post-training is set to $1\times 10^{-3}$.

	The results show that while the training cost of the network mildly increases due to the
	use of post-training, this extra step improves the generalization of the solution. The gain
	is smaller at the end of the training as the network converges to a local minimum, but it
	is consistent. Also, it is interesting to note that the post-training iterations 
	are $4\times$ faster than the classic iterations, due to their inexpensiveness.

\paragraph{Additional Data Sets.}
We also evaluate post-training on the MNIST data set
(65000  images $27 \times 27$, with 55000 for train and 10000 for test; 10 classes) and the pre-processed
FACES data set ($400$ images $64 \times 64$, from which $102400$ sub-images, $32 \times 32$, are extracted, with 92160 for training and 10240 for testing; 40 classes).
For each data set, we train two different convolutional
neural networks -- to assert the influence of the complexity
of the network over post-training:
\begin{itemize}
	\renewcommand{\labelitemi}{\scriptsize$\bullet$}
	\item a small network, with one convolutional layer
	($5 \times 5$ patches, $32$ channels), one $2\times 2$
	max pooling layer, and one fully connected hidden
	layer with $512$ neurons,
	\item a large network, with one convolutional layer
	($5 \times 5$ patches, $32$ channels), one $2\times 2$
	max pooling layer, one convolutional layer
	($5 \times 5$ patches, $64$ channels), one $2\times 2$
	max pooling layer and one fully connected hidden
	layer with $1024$ neurons.
\end{itemize}

We use dropout for the regularization, and set $ \lambda = 1\times 10^{-2}$.
We compare the performance gain resulting of the application of
post-training (100 iterations) at different epochs of each of
these networks. The results are reported in Table \ref{tab:conv_xp}.

\begin{center}
\begin{table}[t]
  \renewcommand{\arraystretch}{1.15}

\caption{Comparison of the performances (classification error)
of different networks on different data sets, at different epochs,
with or without post-training. }
\label{tab:conv_xp}
\centering
\begin{tabular}{|c|c|c|c|c|}
\hline
\mbox{Data set}&\mbox{ Network }&\mbox{Iterations}&\mbox{Mean (Std) Error in \% }&
				  \parbox{10em}{\vskip.3em\centering Mean (Std) Error with \\post-training in \% }\\
\hline
\multirow{9}{*}{FACES} & \multirow{3}{*}{Small} & 5000  & 21,5 (10) & 19,1 (12)\\
 && 10000  & 20 (4) & 19 (3,5)\\
 && 20000  & 18 (0,9) & 16,5 (0,8) \\\cline{2-5}
&\multirow{3}{*}{Large} & 5000 & 25 (15) & 24 (15)\\
 && 10000 & 15 (5) & 12 (5)\\
 && 20000 & 11 (0,5) & 10 (0,5)\\\hline

\multirow{6}{*}{MNIST} & \multirow{3}{*}{Small}
	 & 1000 & 10.7 (1)  & 9.2 (1,1)\\
	&& 2000 & 7,5 (0,7) & 6,7 (0,6)\\
 	&& 5000 & 4,1 (0,2) & 3,9 (0,2)\\\cline{2-5}
&\multirow{3}{*}{Large}
	 & 1000 & 9,1 (1,3) & 8,5 (1,4)\\
	&& 2000 & 4,1 (0,2) & 3,5 (0,2)\\
	&& 5000 & 1,1 (0,01) & 0,9 (0,01)\\
 \hline
\end{tabular}
\end{table}

\end{center}

As seen in \autoref{tab:conv_xp}, post-training improves the test
performance of the networks with as little as $100$ iterations
-- which is negligible compared to the time required to train
the network. While the improvement varies depending on the
complexity of the network, of the data set, and of the time
spent training the network, it is important to remark that
it always provides an improvement.




\subsection{Recurrent Neural Network}
\label{sub:ptb}

\begin{figure*}[t]
\centering
\includegraphics[width=\linewidth]{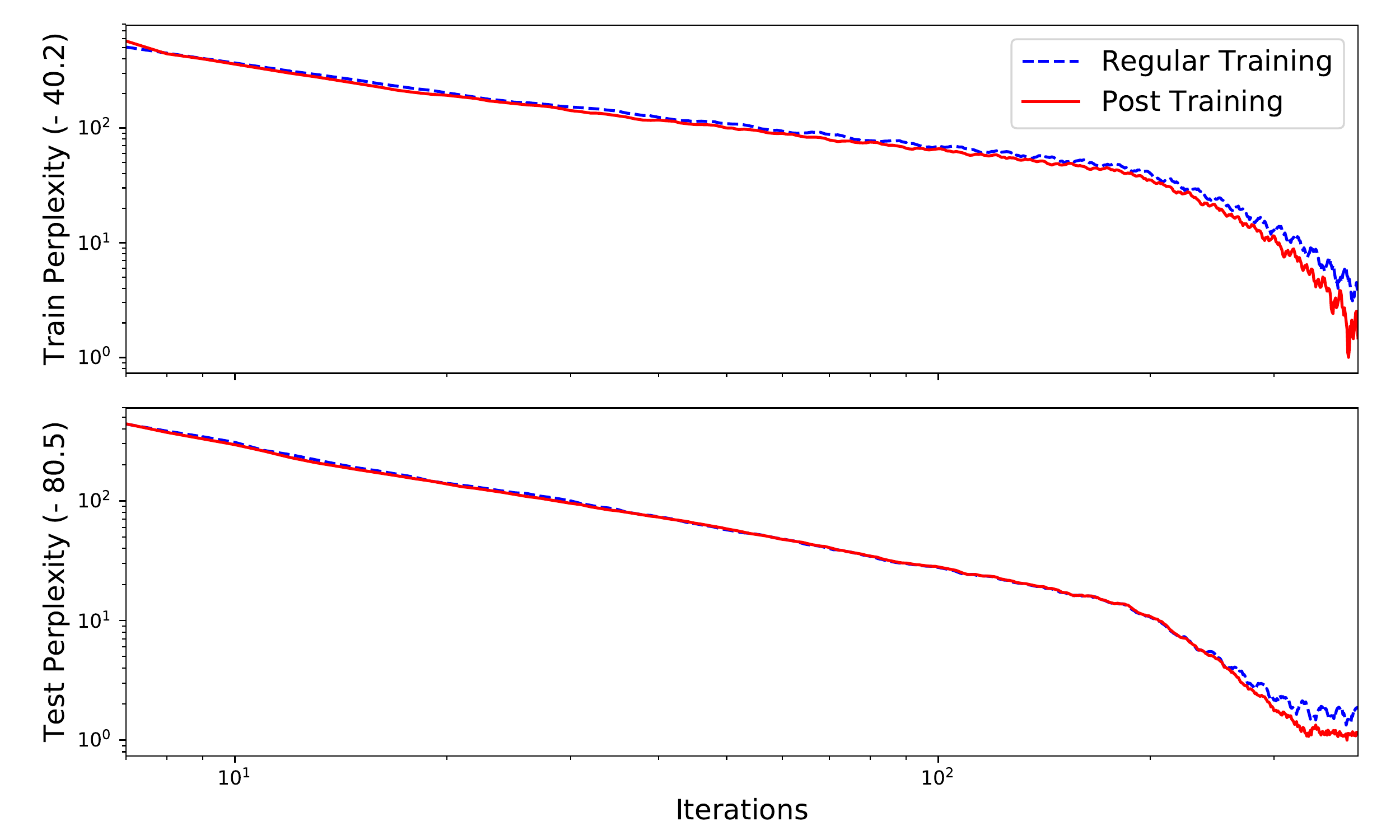}
\caption{Evolution of the performances of the Recurrent network on the PTB data set. The
		 \emph{top} figure presents the train perplexity and the \emph{bottom} figure
		 displays the test perplexity. For the post-training, the value of the curve at
		 iteration $q$ is the error for a network trained for $q-100$ iterations with the
		 regular training strategy and then trained for $100$ iterations with post-training.
}
\label{fig:ptb}
\end{figure*}

	While the kernel framework developed in \autoref{sec:post_train:contrib} does not apply directly
	to Recurrent Neural Network, the idea of post-training can still be applied.  In this
	experiment, we test the performances of post-training on Long Short-Term Memory-based
	networks (LSTM), using {\tt PTB} data set \citep{marcus1993building}.

\paragraph{Penn Tree Bank (PTB).}
This data set is composed of $929k$ training words and $82k$
test word, with a $10000$ words vocabulary. We train a
recurrent neural network to predict the next word given
the word history. We use the architecture proposed by
\citet{zaremba2014recurrent}, composed of 2 layers of $1500$
LSTM units with tanh activation, followed by a fully
connected {\tt softmax} layer. The network is trained
to minimize the average per-word perplexity for $100$ epochs,
with batches of
size $20$, using gradient descent, an exponential weight
decay for the learning rate, and dropout for regularization.
The performances of the network after each epoch are compared
to the results obtained if the $100$ last steps (i.e. $100$ batches)
are done using post-training. The regularization parameter
for post-training, $\lambda$, is set to $1\times 10^{-3}$.
The results are reported in \autoref{fig:ptb}, which presents
the evolution of the training and testing perplexity.

Similarly to the previous experiments, post-training improves
the test performance of the networks, even after the network
has converged.

\subsection{Optimal Last Layer for Deep Ridge Regression}
\label{sub:expopt}

	In this subsection we aim to empirically evaluate the
	close-form solution discussed in \autoref{sec:post_train:contrib}
	for regression tasks. We set the activation function of
	the last layer to be the identity $f_L(x) = x,$ and
	consider the loss function to be the least-squared
	error $\ell(x,y) = \| x-y \|^2_2$ in \autoref{eq:init}.
	In in each experiment, \autoref{eq:sol_alpha} and \autoref{eq:w*}
	are used to compute $W^*$ for the kernel learned after the
	regular training of the neural network, which learn
	the embedding $\Phi_{L-1}$ and an estimate $W_L$~. In order
	to illustrate this result, and to compare the performances
	of the weights $W^*$ with respect to the weights $W_L$,
	learned either with usual learning strategies or
	with post-training, we train a neural network on two
	regression problems using a real and a synthetic data set.
	$70\%$ of the data are used for training, and $30 \%$ for testing.

\paragraph{Real Data Set Regression.}
For this experiment, we use the Parkinson Telemonitoring
data set \citep{tsanas2010accurate}. The input consists in
$5,875$ instances of $17$ dimensional data, and the output
are one dimensional real number. For this data set, a neural
network made of two fully connected hidden layers of size $17$
and $10$ with respectively \texttt{tanh} and RELU activation,
is trained for $250$, $500$ and $750$ iterations, with batches of
size $50$. The layer weights are all regularized with the $\ell_2$-norm
and a fixed regularization parameter $\lambda = 10^{-3}$ .
Then, starting from each of the trained networks, $200$ iterations of
post-training are used  with the same regularization parameter $\lambda$
and the performances are compared to the closed-form solutions
computed using \eqref{eq:sol_alpha} for each saved network.
The results are presented in \autoref{tab:regression}.

\paragraph{Simulated Data Set Regression.}
For this experiment, we use a synthetic data set.
The inputs were generated using a uniform distribution
on $\left[0,1\right]^{10}$. The outputs are computed
as follows:
\[
	Y = \tanh(XW_1)W_2
\]
where $W_1 \in \left[-1, 1\right]^{10\times 5}$ and
$W_2 \in \left[-1, 1\right]^{5}$ are randomly generated
using a uniform law. In total, the data set is composed of
$10,000$ pairs $(x_i, y_j).$
For this data set, a neural network with two fully connected
hidden layers of size $10$ with activation \texttt{tanh} for
the first layer and RELU for the second layer is trained for $250, 500$ and
$750$ iterations, with batches of size $50$. We use the same
protocol with $200$ extra post-training iterations.
The results are presented in \autoref{tab:regression}.

For these two experiments, the post-training improves the performances
toward these of the optimal solution, for several choices of stopping times.
It is worth noting that the performance of the optimal solution is better
when the first layers are not fully optimized with Parkinson Telemonitoring
data set. This effect denotes an overfitting tendency with the full training,
where the first layers become overly specified for the training set.


\begin{center}
\begin{table}[t]
  \renewcommand{\arraystretch}{1.2}

\caption{Comparison of the performances (RMSE) of fully connected networks on different data sets, at different epochs, with or without post-training. }
\label{tab:regression}
\centering
\begin{tabular}{|c|c|c|c|c|}
\hline
Data set &Iterations&  \parbox{8em}{\vskip.3em\centering Error with \\classic training}&
					   \parbox{8em}{\vskip.3em\centering Error with \\ post-training} &
					   \parbox{8em}{\vskip.3em\centering Error with\\ optimal last layer }\\
\hline
\multirow{3}{*}{Parkinson} & 250 & 0.832 & 0.434 & 0.119 \\
& 500 & 0.147 & 0.147 & 0.140 \\
& 750 & 0.134 & 0.132 & 0.131 \\\hline
\multirow{3}{*}{Simulated} & 250 & 1.185 & 1.117 & 1.075 \\
& 500 & 0.533 & 0.450 & 0.447 \\
& 750 & 0.322 & 0.300 & 0.296 \\\hline

\end{tabular}
\end{table}

\end{center}


\crossref{}

\section{Discussion}
\label{sec:post_train:discussion}

The experiments presented in \autoref{sec:post_train:numerical} show that
post-training improves the performances of all the networks
considered -- including recurrent, convolutional and fully
connected networks.
The gain is significant, regardless of the time at which the
regular training is stopped and the post-training is done.
In both the CIFAR10 and the PTB experiment, the gap between
the losses with and without post-training is more pronounced if the training is stopped early, and tends to be smaller as the network converges
to a better solution (see \autoref{fig:ptb} and \autoref{fig:cifar10}).
%
The reduction of the gap
between the test performances with and without post-training
is made clear in \autoref{tab:conv_xp}. For the MNIST data set,
with a small-size convolutional neural network, while the error rate
drops by 1.5\% when post-training is applied after 5000
iterations, this same error rate only drops by 0.2\% when it
is applied after 20000 iterations. This same observation
can be done for the other results reported in \autoref{tab:conv_xp}.
However, while the improvement is larger when the network did not
fully converge prior to the post-training, it is still significant
when the network has reached its minimum: for example in PTB the final
test perplexity is $81.7$ with post-training and  $82.4$ without; in
CIFAR10 the errors are respectively $0.147$ and $0.154$.

If the networks are allowed to moderately overfit, for instance by
training them with regular algorithm for a very large number of
iterations, the advantage provided by post-training vanishes:
for example in PTB the test perplexity after $2000$ iterations
(instead of $400$) is $83.2$ regardless of post-training.
This is coherent with the intuition behind the post-training:
after overfitting, the features learned by the network become
less appropriate to the general problem, therefore their optimal usage obtained by
post-training no longer provide an advantage.

%

It is important to note that the post-training computational
cost is very low compared to the full training computations.
For instance, in the CIFAR10 experiment, each iteration for
post-training is $4 \times$ faster on the same GPU than
an iteration using the full gradient. Also, in the different
experiments, post-training produces a performance gap after
using as little as $100$ batches.
There are multiple reasons behind this efficiency: first, the system
reaches a local minimum relatively rapidly for post-training as
the problem \autoref{eq:post_training} has a small number of
parameters compared to the dimensionality of the original
optimization problem. Second, the iterations used for
the resolution of \autoref{eq:post_training} are computationally
cheaper, as there is no need to chain high dimensional linear
operations, contrarily to regular backpropagation used during
the training phase. Finally, since the post-training optimization
problem is generally convex, the optimization is guaranteed
to converge rapidly to the optimal weights for the last layer.
%

Another interesting point is that there is no evidence that the
post-training step leads to overfitting. In CIFAR10, the test
error is improved by the use of post-training, although the training
loss is similar. The other experiments do not show signs of
overfitting either as the test error is mostly improved by
our additional step.
This stems from the fact that the post-training
optimization is much simpler than the original problem as it
lies in a small-dimensional space -- which, combined with the
added $\ell_2$-regularization, efficiently prevents overfitting.
The regularization parameter $\lambda$ plays an important role in post-training.
Setting $\lambda$ to be very large reduces the explanatory capacity
of the networks whereas if $\lambda$ is too small, the capacity
can become too large and lead to overfitting. Overall, our
experiments highlighted that the post-training produces significant
results for any choice of $\lambda$ reasonably small (i.e
\mbox{$10^{-5} \le \lambda \le 10^{-2}~$}). This parameter
is linked to the regularization parameter of the kernel
methods, as stated in \autoref{sec:post_train:kernel}.

Overall, these results show that the post-training step can be
applied to most trained networks, without prerequisites about how
optimized they are since post-training does not degrade their performances,
providing a consistent gain in performances for a very low
additional computational cost.

In \autoref{sub:expopt}, numerical experiments highlight the link
between post-training and kernel methods. As illustrated in
\autoref{tab:regression}, using the optimal weights derived from kernel
theory immediately a performance boost for the considered network.
The post-training step estimate numerically this optimal layer with
the gradient descent optimizer.
However, computing the optimal weights for the last layer is only achievable
for small data set due to the required matrix inversion. Moreover, the closed form
solution is known only for specific problems, such as kernelized least square regression. But post-training
approaches the same performance in these cases solving \autoref{eq:post_training}
with gradient-based methods.
%

The post-training can be linked very naturally to the idea of
pre-training, developed notably by
\citet{larochelle2007empirical}, \citet{Hinton2006} and \citet{hinton2006reducing}.
The unsupervised pre-training of a layer is designed to find a
representation that captures enough information from the data
to be able to reconstruct it from its embedding. The goal is
thus to find suitable parametrization of the general layers to extract
good features, summarizing the data. Conversely, the goal of the
post-training is, given a representation, to find the best
parametrization of the last layer to discriminate the data.
These two steps, in contrast with the usual training, focus
on respectively the general or specific layers.

\section{Conclusion}
\label{sec:ccl}

	In this work, we studied the concept of post-training,
	an additional step performed after the regular training,
	where only the last layer is trained. This step is intended
	to take fully advantage of the data representation learned
	by the network. We empirically shown that post-training is
	computationally inexpensive and provide a non negligible
	increase of performance on most neural network structures. 
	While we chose to focus on post-training solely the last
	layer -- as it is the most specific layer in the network
	and the resulting problem is strongly convex under
	reasonable prerequisites -- the relationship between the
	number of layers frozen in the post-training and the
	resulting improvements might be an interesting direction
	for future works.





\newpage
\bibliographystyle{\Bibstyle}
\bibliography{\BibFile}

\begin{thebibliography}{24}
\providecommand{\natexlab}[1]{#1}
\providecommand{\url}[1]{\texttt{#1}}
\expandafter\ifx\csname urlstyle\endcsname\relax
  \providecommand{\doi}[1]{doi: #1}\else
  \providecommand{\doi}{doi: \begingroup \urlstyle{rm}\Url}\fi

\bibitem[Bengio \& LeCun(2007)Bengio and LeCun]{bengio2007scaling}
Yoshua Bengio and Yann LeCun.
\newblock {Scaling learning algorithms towards AI}.
\newblock \emph{Large-scale kernel machines}, 34\penalty0 (5):\penalty0 1--41,
  2007.

\bibitem[Dahl et~al.(2013)Dahl, Sainath, and Hinton]{dahl2013improving}
George~E Dahl, Tara~N Sainath, and Geoffrey~E Hinton.
\newblock {Improving deep neural networks for LVCSR using rectified linear
  units and dropout}.
\newblock In \emph{IEEE International Conference on Acoustics, Speech and
  Signal Processing (ICASSP)}, pp.\  8609--8613, Vancouver, Canada, 2013. IEEE.

\bibitem[Duchi et~al.(2011)Duchi, Hazan, and Singer]{Duchi2011}
John Duchi, Elad Hazan, and Yoram Singer.
\newblock {Adaptive Subgradient Methods for Online Learning and Stochastic
  Optimization}.
\newblock \emph{Journal of Machine Learning Research (JMLR)}, 12:\penalty0
  2121--2159, 2011.
\newblock ISSN 15324435.
\newblock URL \url{http://jmlr.org/papers/v12/duchi11a.html}.

\bibitem[Erhan et~al.(2010)Erhan, Bengio, Courville, Manzagol, Vincent, and
  Bengio]{erhan2010does}
Dumitru Erhan, Yoshua Bengio, Aaron Courville, Pierre-Antoine Manzagol, Pascal
  Vincent, and Samy Bengio.
\newblock {Why does unsupervised pre-training help deep learning?}
\newblock \emph{Journal of Machine Learning Research (JMLR)}, 11\penalty0
  (Feb):\penalty0 625--660, 2010.

\bibitem[Hadsell et~al.(2008)Hadsell, Erkan, Sermanet, Scoffier, Muller, and
  LeCun]{hadsell2008deep}
Raia Hadsell, Ayse Erkan, Pierre Sermanet, Marco Scoffier, Urs Muller, and Yann
  LeCun.
\newblock {Deep belief net learning in a long-range vision system for
  autonomous off-road driving}.
\newblock In \emph{IEEE/RSJ International Conference on Intelligent Robots and
  Systems}, pp.\  628--633. IEEE, 2008.

\bibitem[Hinton \& Salakhutdinov(2006)Hinton and
  Salakhutdinov]{hinton2006reducing}
Geoffrey~E Hinton and Ruslan~R Salakhutdinov.
\newblock {Reducing the dimensionality of data with neural networks}.
\newblock \emph{Science}, 313\penalty0 (5786):\penalty0 504--507, 2006.

\bibitem[Hinton \& Salakhutdinov(2008)Hinton and
  Salakhutdinov]{hinton2008using}
Geoffrey~E Hinton and Ruslan~R Salakhutdinov.
\newblock {Using deep belief nets to learn covariance kernels for Gaussian
  processes}.
\newblock In \emph{Advances in Neural Information Processing Systems (NIPS)},
  pp.\  1249--1256, Vancouver, Canada, 2008.

\bibitem[Hinton et~al.(2006)Hinton, Osindero, and Teh]{Hinton2006}
Geoffrey~E Hinton, Simon Osindero, and Yee-Whye Teh.
\newblock {A fast learning algorithm for deep belief nets}.
\newblock \emph{Neural computation}, 18\penalty0 (7):\penalty0 1527--1554,
  2006.

\bibitem[Hinton et~al.(2012)Hinton, Srivastava, and Swersky]{Hinton2012}
Geoffrey~E Hinton, Nitish Srivastava, and Kevin Swersky.
\newblock {Lecture 6a- overview of mini-batch gradient descent}.
\newblock Slide for online class COURSERA: Neural Networks for Machine
  Learning, 2012.
\newblock URL
  \url{http://www.cs.toronto.edu/$\sim$tijmen/csc321/slides/lecture_slides_lec6.pdf}.

\bibitem[Kadri et~al.(2015)Kadri, DUFLOS, Preux, canu, Rakotomamonjy, and
  Audiffren]{kadri2015operator}
Hachem Kadri, Emmanuel DUFLOS, Philippe Preux, Stephane canu, Alain
  Rakotomamonjy, and Julien Audiffren.
\newblock {Operator-valued Kernels for Learning from Functional Response Data}.
\newblock \emph{Journal of Machine Learning Research (JMLR)}, 2015.

\bibitem[Kingma \& Ba(2015)Kingma and Ba]{Kingma2015}
Diederik~P. Kingma and Jimmy Ba.
\newblock {Adam: A Method for Stochastic Optimization}.
\newblock In \emph{International Conference on Learning Representation (ICLR)},
  pp.\  1--10, San Diego, CA, USA, 2015.
\newblock ISBN 9781450300728.
\newblock \doi{http://doi.acm.org.ezproxy.lib.ucf.edu/10.1145/1830483.1830503}.
\newblock URL \url{http://arxiv.org/abs/1412.6980}.

\bibitem[Krizhevsky(2009)]{krizhevsky2009learning}
Alex Krizhevsky.
\newblock {Learning multiple layers of features from tiny images}.
\newblock Master's thesis, University of Toronto, 2009.

\bibitem[Larochelle et~al.(2007)Larochelle, Erhan, Courville, Bergstra, and
  Bengio]{larochelle2007empirical}
Hugo Larochelle, Dumitru Erhan, Aaron Courville, James Bergstra, and Yoshua
  Bengio.
\newblock {An empirical evaluation of deep architectures on problems with many
  factors of variation}.
\newblock In \emph{International Conference on Machine Learning (ICML)}, pp.\
  473--480, Corvallis, United States, 2007. ACM.

\bibitem[Larochelle et~al.(2009)Larochelle, Bengio, Louradour, and
  Lamblin]{larochelle2009exploring}
Hugo Larochelle, Yoshua Bengio, J{\'{e}}r{\^{o}}me Louradour, and Pascal
  Lamblin.
\newblock {Exploring strategies for training deep neural networks}.
\newblock \emph{Journal of Machine Learning Research (JMLR)}, 10\penalty0
  (Jan):\penalty0 1--40, 2009.

\bibitem[Marcus et~al.(1993)Marcus, Marcinkiewicz, and
  Santorini]{marcus1993building}
Mitchell~P Marcus, Mary~Ann Marcinkiewicz, and Beatrice Santorini.
\newblock {Building a large annotated corpus of English: The Penn Treebank}.
\newblock \emph{Computational linguistics}, 19\penalty0 (2):\penalty0 313--330,
  1993.

\bibitem[Montavon et~al.(2011)Montavon, Braun, and
  M{\"{u}}ller]{montavon2011kernel}
Gr{\'{e}}goire Montavon, Mikio~L Braun, and Klaus-Robert M{\"{u}}ller.
\newblock {Kernel analysis of deep networks}.
\newblock \emph{Journal of Machine Learning Research (JMLR)}, 12\penalty0
  (Sep):\penalty0 2563--2581, 2011.
\newblock ISSN 1532-4435.

\bibitem[Morgan \& Bourlard(1990)Morgan and Bourlard]{morgan1989generalization}
Nelson Morgan and Herv{\'{e}} Bourlard.
\newblock \emph{{Generalization and parameter estimation in feedforward nets:
  Some experiments}}.
\newblock International Computer Science Institute, Denver, United States,
  1990.

\bibitem[Poultney et~al.(2006)Poultney, Chopra, Cun, and
  Others]{poultney2006efficient}
Christopher Poultney, Sumit Chopra, Yann~L Cun, and Others.
\newblock {Efficient learning of sparse representations with an energy-based
  model}.
\newblock In \emph{Advances in Neural Information Processing Systems (NIPS)},
  pp.\  1137--1144, Vancouver, Canada, 2006.

\bibitem[Rahimi \& Recht(2007)Rahimi and Recht]{rahimi2007random}
Ali Rahimi and Benjamin Recht.
\newblock {Random features for large-scale kernel machines}.
\newblock In \emph{Advances in Neural Information Processing Systems (NIPS)},
  pp.\  1177--1184, Vancouver, Canada, 2007.

\bibitem[Salakhutdinov \& Hinton(2009)Salakhutdinov and
  Hinton]{salakhutdinov2009semantic}
Ruslan Salakhutdinov and Geoffrey Hinton.
\newblock {Semantic hashing}.
\newblock \emph{International Journal of Approximate Reasoning}, 50\penalty0
  (7):\penalty0 969--978, 2009.

\bibitem[Sch{\"{o}}lkopf et~al.(2001)Sch{\"{o}}lkopf, Herbrich, and
  Smola]{scholkopf2001generalized}
Bernhard Sch{\"{o}}lkopf, Ralf Herbrich, and Alex~J Smola.
\newblock {A generalized representer theorem}.
\newblock In \emph{International Conference on Computational Learning Theory
  (COLT)}, pp.\  416--426. Springer, 2001.

\bibitem[Tsanas et~al.(2010)Tsanas, Little, McSharry, and
  Ramig]{tsanas2010accurate}
Athanasios Tsanas, Max~A Little, Patrick~E McSharry, and Lorraine~O Ramig.
\newblock {Accurate telemonitoring of Parkinson's disease progression by
  noninvasive speech tests}.
\newblock \emph{IEEE Transactions on Biomedical Engineering}, 57\penalty0
  (4):\penalty0 884--893, 2010.

\bibitem[Yosinski et~al.(2014)Yosinski, Clune, Bengio, and
  Lipson]{yosinski2014transferable}
Jason Yosinski, Jeff Clune, Yoshua Bengio, and Hod Lipson.
\newblock {How transferable are features in deep neural networks?}
\newblock In \emph{Advances in Neural Information Processing Systems (NIPS)},
  pp.\  3320--3328, Montreal, Canada, 2014.
\newblock URL \url{http://arxiv.org/abs/1411.1792}.

\bibitem[Zaremba et~al.(2014)Zaremba, Sutskever, and
  Vinyals]{zaremba2014recurrent}
Wojciech Zaremba, Ilya Sutskever, and Oriol Vinyals.
\newblock {Recurrent neural network regularization}.
\newblock \emph{arXiv preprint}, arXiv:1409\penalty0 (2329), 2014.

\end{thebibliography}

\newpage
\appendix

\title{Post Training in Deep Learning\\{\it Supplementary materials}}

\maketitle

\section{Convex loss}
\label{sec:post_train:appendix}

	We show here, for the sake of completeness, that the post-training problem is convex for
	the softmax activation in the last layer and the cross entropy loss. This result is
	proved showing that the hessian of the function is positive semidefinite, as it is
	a diagonally dominant matrix.

\begin{proposition}[convexity]
\label{prop:convex_cost}
$\forall N,M \in  \mathbb{N},$ $\forall X \in \Rset^{N}$, $\forall j \in \left[1,M \right]$, the following function F is convex:
\begin{align*}
F :& \Rset^{N\times M} \mapsto \Rset \\
&  W \rightarrow \log \left( \sum_{i=1}^M  \exp( XW_i) \right) - \sum_{i=1}^M  \delta_{ij} \log \left(  \exp(  XW_i)  \right)   .
\end{align*}
where $\delta$ is the Dirac function, and $W_i$ denotes the $i$-th row of a $W$.
\end{proposition}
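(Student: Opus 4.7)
The plan is to reduce the statement to a direct Hessian computation. First I would rewrite $F$ in terms of the pre-activation scalars $z_i := XW_i$ and observe that the second sum in the definition of $F$ is linear in $W$ (only the $i=j$ term survives, contributing $-z_j = -XW_j$), so it makes no contribution to the Hessian. The whole argument then reduces to controlling the Hessian of the log-sum-exp part $L(W) := \log\sum_{i=1}^M \exp(XW_i)$.

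A direct chain-rule calculation, introducing the softmax probabilities
\begin{equation*}
p_i \;:=\; \frac{\exp(XW_i)}{\sum_{l=1}^{M} \exp(XW_l)},
\end{equation*}
yields
\begin{equation*}
\frac{\partial^{2} L}{\partial W_{ki}\,\partial W_{mn}} \;=\; X_k X_m \bigl(p_i\,\delta_{in} - p_i p_n\bigr).
\end{equation*}
Vectorising the entries of $W$ so that index pairs $(k,i)$ enumerate coordinates, the Hessian $H$ of $F$ therefore factorises as the Kronecker product $H = (XX^{\top}) \otimes A$, where $A := \mathrm{diag}(p) - pp^{\top}$ is the covariance matrix of the categorical distribution with weights $p_1,\ldots,p_M$.

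It then remains to verify that each Kronecker factor is positive semidefinite. The rank-one matrix $XX^{\top}$ is PSD by construction. For $A$, I would invoke the diagonal dominance hint in the statement: $A$ is symmetric; its diagonal entries $A_{ii} = p_i(1-p_i) = p_i \sum_{n\ne i} p_n$ are non-negative; and for $n\ne i$ we have $|A_{in}| = p_i p_n$, so $A_{ii} = \sum_{n\ne i} |A_{in}|$ for every $i$. Hence $A$ is symmetric, diagonally dominant, with non-negative diagonal, which forces $A \succeq 0$ (e.g.\ via Gershgorin). Kronecker products of PSD matrices are PSD, so $H \succeq 0$ and $F$ is convex.

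The only real obstacle is the index bookkeeping in the four-index second derivative; once the factorisation $H = XX^{\top} \otimes A$ is identified, both PSD checks collapse to one-line estimates, and there is no subtlety elsewhere in the argument.
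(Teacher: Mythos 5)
Your proposal is correct and follows essentially the same route as the paper: both compute the Hessian explicitly, factor it as a Kronecker product of $XX^{\top}$ with the softmax covariance matrix $\mathrm{diag}(p)-pp^{\top}$, and conclude via symmetric diagonal dominance with non-negative diagonal that each factor (hence the product) is positive semidefinite. The only cosmetic difference is that you discard the linear term $-XW_j$ before differentiating, while the paper carries the full expression $-\log P_j(W)$ through the computation; the resulting Hessian is identical.
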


\begin{proof}[Proof 1]
Let $$P_i(W) =  \frac{\exp(XW_i)}{\sum_{j=1}^M \exp(XW_j) }. $$

then
\begin{equation*}
\frac{\partial P_i}{\partial W_{m,n}} = \left\lbrace
\begin{aligned}
  & - x_{n} P_i(W) P_m(W)  \quad \text{ if $i \neq m$}\\
  & - x_{n} P_m^2(W) + x_n P_m(W) \quad \text{otherwise}
\end{aligned}
\right.
\end{equation*}
Noting that
$$ F(W) = -\sum_{i=1}^M  \delta_{ij} \log \left( P_i(W) \right),$$
we have
\begin{align*}
&\frac{\partial F(W)}{\partial W_{m,n}} = - \sum_{i=1}^M  \delta_{ij} \frac{1}{P_i(W)} \frac{\partial P_i}{\partial W_{m,n}} \\
\quad &= x_n \left(  \sum_{i=1}^M  \delta_{ij} P_m(W)  - \delta_{mj}   \right)\\
\quad &= x_n \left( P_m(W)  - \delta_{mj}   \right),
\end{align*}
hence
\begin{align*}
&\frac{\partial^2 F(W)}{\partial W_{m,n}\partial W_{p,q}} =   x_n \left(   \frac{\partial P_m}{\partial W_{p,q}} \right),\\
& \quad =  x_n x_q  P_m(W) \left( \delta_{m,p} - P_p(W)  \right).
\end{align*}
Hence the following identity
$$ H(F) =\mathbf{P}(W)  \otimes  (X X\tran)$$
where $\otimes$ is the Kronecker product, and the matrix $\mathbf{P}(W)$ is defined by $\mathbf{P} _{m,p} = P_m(W) \left( \delta_{m,p} - P_p(W) \right)$. Now since  $\forall 1 \le m \le M$,

\begin{align*}
&\sum_{p=1,p \neq m}^M \left\vert \mathbf{P}_{m,p} \right\vert=   P_m(W) \sum_{p=1,p \neq m}^M  P_p(W) \\
& \quad =  P_m(W) \left( 1 - P_m(W) \right)\\
& \quad = \mathbf{P}_{m,m}
\end{align*}
$\mathbf P(W)$ is thus a diagonally dominant matrix. Its diagonal elements are positive
\[
	\mathbf{P}_{m, m} = P_m(W) \left( 1 - P_m(W) \right) \ge 0, ~~~~ \text{ as } P_m(W) \in [0, 1]
\]
and thus $\mathbf P(W)$ is positive semidefinite. Since $XX\tran$ is positive semidefinite too,
their Kronecker product is also positive semidefinite, hence the conclusion.

\end{proof}

\end{document}